\documentclass[11pt]{article}

\usepackage[final]{acl}

\usepackage{times}
\usepackage{latexsym}
\usepackage[T1]{fontenc}
\usepackage[utf8]{inputenc}
\usepackage{microtype}
\usepackage{inconsolata}
\usepackage{amsmath,amssymb,amsfonts,amsthm}
\usepackage{booktabs}
\usepackage{algorithm}
\usepackage[noend]{algpseudocode}
\usepackage{multirow}
\usepackage{graphicx}
\usepackage{xcolor}
\usepackage{enumitem}
\usepackage{url}
\newtheorem{definition}{Definition}
\newtheorem{theorem}{Theorem}
\newtheorem{proposition}{Proposition}

\usepackage{xcolor}
\usepackage{pifont}

\IfFileExists{fontawesome5.sty}{%
  \usepackage{fontawesome5}%
  \newcommand{\codeicon}{\faGithub~}%
}{%
  \newcommand{\codeicon}{}%
}

\newcommand{\methodname}{\textsc{CovCal}}
\newcommand{\calX}{\mathcal{X}}

\newcommand{\calS}{\mathcal{S}}
\newcommand{\calT}{\mathcal{T}}
\newcommand{\calD}{\mathcal{D}}
\newcommand{\calC}{\mathcal{C}}
\newcommand{\calO}{\mathcal{O}}
\newcommand{\ind}{\mathbb{I}}

\newcommand{\typ}{\mathrm{typ}}
\newcommand{\prf}{\mathrm{prf}}
\newcommand{\unres}{\mathrm{unres}}

\title{
Risk-Controlled Lean-as-Judge for \\Natural-Language Mathematical Reasoning}

\author{Pauline Bourigault \\
Imperial College London
\And Xiaotong Ji \\
Huawei Noah's Ark Lab
\And Matthieu Zimmer \\
Huawei Noah's Ark Lab\\
\AND 
Rasul Tutunov \\
Huawei Noah's Ark Lab
\And Haitham Bou-Ammar 
\\
Huawei Noah's Ark Lab\\ 
UCL Centre for AI
}

\begin{document}
\maketitle

\begin{abstract}
Lean is increasingly used to judge natural-language mathematical answers, but its signal is partial: many answers never formalize, and a failed proof may reflect an ill-typed statement or a missing library fact, not a wrong answer. On MATH-500 we show this signal is (i) sharply coverage-dependent, that is the proof-winning answer is correct $96\%$ of the time at high proved coverage but $20\%$ at low, and (ii) sparse and often unfaithful: a $7$B autoformalizer proves a class for only $28\%$ of problems, and a manual audit finds only $\approx\!43\%$ of those proofs faithful. We propose \methodname{}, a selector over Lean-trace diagnostics that certifies a finite-sample selective-risk bound on accepted answers or abstains, under two regimes (a conservative Bonferroni bound and a tighter dev-then-cal rule). Feasibility depends on autoformalization coverage: with the $7$B formalizer the signal is too sparse and Bonferroni abstains on all $20$ bootstrap partitions, whereas a prover-specialized formalizer reaches $79\%$ coverage and flips it to feasible on $17$ of $20$, accepting $\approx\!48\%$ of problems at $0.98$ accepted accuracy. Since self-consistency alone is already $91\%$ accurate, our contribution is a precise account of when, and with which formalizer, a partial formal signal can be trusted under risk control.
\end{abstract}

\begin{figure*}[h]
\centering
\includegraphics[width=\textwidth]{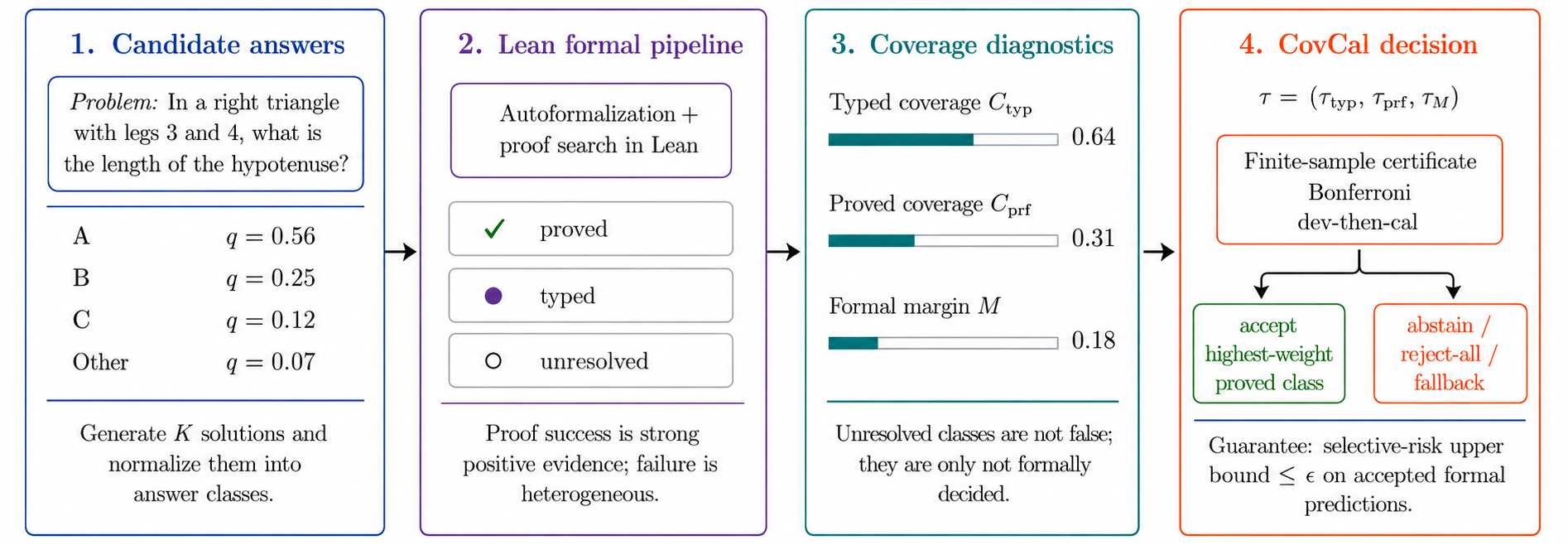}
\caption{\methodname{} treats Lean as a partial-observation judge: trust formal evidence when coverage is sufficient, and abstain when coverage is too low. The method couples Lean-based formal traces with finite-sample selective prediction.}
\label{fig:overview}
\end{figure*}

\section{Introduction}

A Lean-kernel-checked proof 
is strong positive evidence for the formal statement that Lean has checked.
Recent NLP systems use this signal by translating natural-language question--answer pairs into Lean, attempting proof search, and preferring candidates whose formal proofs succeed \citep{yao2025fans,liu2025safe}. Our question is complementary: \emph{when should a downstream system trust this formal signal, and when should it treat the signal as out-of-coverage?} The distinction matters because proof failure is not the same as mathematical falsity. A failed attempt may reflect an incorrect answer, but it may also reflect an ill-typed formalization, missing library context, or a proof-search timeout. Treating proof failure as a negative correctness signal therefore risks conflating mathematical incorrectness with lack of formal coverage.

In our pipeline, this asymmetry appears as a sharp reliability gap. When the answer classes holding most of the model's sampled solutions are formally proved, raw formal selection has a low error rate; when few of those solutions land in a proved class, its error rate rises sharply. A second view gives the same message: a proof is reliable when the proved answer clearly outweighs the highest-weight unresolved rival, that is the most-sampled answer class Lean left unresolved, but becomes unsafe when a heavily sampled rival remains unresolved. We refer to this empirical pattern as a \emph{coverage cliff}: formal selection is accurate when proofs cover most of the sampled answer mass and unreliable otherwise.

This motivates a partial-observation view 
of Lean-based answer selection (Figure~\ref{fig:overview}). The system does not observe the truth or falsity of every candidate answer. It observes only which normalized answer classes reach a usable Lean statement and which of those are proved. An \emph{answer class} is a group of candidate strings that denote the same final answer, such as $1/2$, $0.5$, and $2/4$. Candidate classes that are not formalized or not proved are therefore unresolved, not automatically wrong. \methodname{} uses simple diagnostics of this partial observation, that is typed coverage, proved coverage, unresolved rival mass, and formal margin, and accepts a formal answer only when a predeclared rule satisfies a finite-sample selective-risk certificate.

The threshold rule is chosen from a fixed finite grid, where each cell sets one cutoff for typed coverage, proved coverage, and the formal margin. We report two certification regimes. The Bonferroni regime searches over the whole grid on the calibration split and applies a union-bound correction for this search. The dev-then-cal regime chooses the threshold on an independent development split and certifies that single threshold once on the calibration split. Both regimes certify only the accepted formal predictions under their assumptions; neither certifies fallback answers or unresolved
candidates.

\paragraph{Contributions.}
\begin{enumerate}[leftmargin=1.2em,itemsep=0.1em]
    \item \textbf{Coverage cliff.} On MATH-500, the proof-winning answer---the highest-weight answer class with a Lean proof---is correct $96\%$ of the time at high proved coverage but only $20\%$ at low coverage. Hence, a Lean proof is reliable only when coverage and margin are high.
    \item \textbf{A real but sparse and imperfect signal.} The formal signal must be faithfulness-audited, not read as correctness: Lean proves a class for only $28\%$ of problems, and a manual audit finds only $\approx\!43\%$ of those proofs faithful to the problem (the rest are true-but-irrelevant or trivial $\mathtt{X}=\mathtt{X}$ identities).
    \item \textbf{Method.} We consider formal answer selection as a partial-observation selective-prediction problem, propose \methodname{}, which is a risk-controlled selector over coverage diagnostics with finite-sample certificates under two regimes, and prove that unresolved inequivalent answer classes cannot be certified from formal observations alone without abstention.
    \item \textbf{Feasibility is formalizer-governed.} Certificate feasibility is governed by autoformalization coverage: a $7$B formalizer is too sparse for any Bonferroni certificate ($0/20$ bootstrap dev/cal/test splits), whereas a prover-specialized formalizer at $79\%$ coverage flips it to feasible ($17/20$).
\end{enumerate}

\section{Formal Verifiers as NLP Judges}

A general Lean-based judging pipeline has four stages. First, a candidate final answer is extracted from a natural-language solution. Second, the problem and candidate answer are translated into one or more Lean statements. Third, Lean attempts to elaborate each statement. Fourth, proof search attempts to close the elaborated statement. 
A verified proof gives strong positive evidence for the formal statement, but any stage can fail for reasons unrelated to the truth of the original informal answer. Autoformalization may change the semantics of the problem. Elaboration may fail because of syntax, type-class inference, missing imports, or library mismatch. Proof search may time out even for true statements. Hence, the absence of a proof is a heterogeneous event, not a mathematical label.

Prior work demonstrates the usefulness of this pipeline. FANS uses Lean 4 to formalize question--answer pairs and assist answer selection \citep{yao2025fans}. Safe uses Lean 4 for retrospective step-aware verification of natural-language mathematical reasoning \citep{liu2025safe}. These works show that formal proof can improve math reasoning systems. Our question is complementary: how much of the candidate-answer space
is visible to the formal pipeline, and when is that partial trace sufficient for answer selection?

The answer requires separating three notions that are often collapsed: (i) whether a candidate answer is mathematically correct, (ii) whether its intended formal statement is well represented in Lean, and (iii) whether the current prover can decide it within budget. \methodname{} does not solve (i)--(iii). Instead, it measures the observable mass of candidates for which (ii) and (iii) are available, then calibrates the risk of trusting the resulting formal selection.

\section{Problem Setup}
We first describe the objects used by the selector. As a running example, suppose a model samples $K=32$ solutions to the same problem. Several samples may end with surface forms such as $1/2$, $0.5$, and $2/4$; these should count as one final-answer class rather than three different answers. Another group of samples may end with $3/4$. \methodname{} reasons over these normalized classes and their total weights, not over raw strings.

Let $x\in\calX$ be a natural-language mathematical problem with reference answer $a^\star(x)$. A generator or ensemble produces $K$ candidate answers $a_1,\ldots,a_K$ with nonnegative weights $q_j(x)$ satisfying $\sum^K_j q_j(x)=1$. The weights may be self-consistency frequencies \citep{wang2023selfconsistency}, normalized reranker scores, or a uniform distribution over sampled candidates.

\paragraph{Answer classes.}
Raw candidate strings are a poor unit of selection: $1/2$, $0.5$, and $\frac{2}{4}$ may be the same answer, while two candidates with the same surface form may correspond to different formalizations. We therefore map candidates to normalized answer classes
\begin{equation}
    e_x(a_j)\in \calC(x),
\end{equation}
using answer extraction, algebraic simplification, exact string normalization, or formal equivalence checks when available. The class weight is
\begin{equation}
    Q_c(x)=\sum_{j:e_x(a_j)=c} q_j(x),\qquad \sum_{c\in\calC(x)}Q_c(x)=1.
\end{equation}
All coverage, margin, and selection quantities are computed over classes instead of raw candidates. This prevents an unresolved duplicate of an already proved answer from being treated as a rival.

\paragraph{Formal observation.}
For each pair $(x,a_j)$, a formalization module attempts to produce one or more Lean artifacts. A verifier elaborates each artifact and attempts proof search. We summarize the best status for candidate $a_j$ by
\begin{align}
    s_j &\in \calS,\nonumber\\
    \calS &= \{\mathtt{proved},\mathtt{typechecked},\mathtt{illtyped},\nonumber\\
    &\qquad \mathtt{timeout},\mathtt{unformalized}\}.
\end{align}
The five statuses are defined in Appendix~\ref{app:artifact-log} ($\mathtt{proved}$ is a kernel-checked proof; $\mathtt{typechecked}$ elaborates but is unproved; the rest are failures); optional decisive negation/inequivalence statuses are supported but not required.

For each answer class $c$, define class-level indicators
\begin{align}
    P_c(x)&=\ind\{\exists j:e_x(a_j)=c,\ s_j=\mathtt{proved}\},\\
    T_c(x)&=\ind\{\exists j:e_x(a_j)=c,\ s_j\in\calS_{\typ}\},\nonumber\\
    \calS_{\typ}&=\{\mathtt{proved},\mathtt{typechecked},\mathtt{timeout}\}.
\end{align}
$\calS_{\typ}$ is the elaborated-statement set; $\mathtt{timeout}$ qualifies since proof search runs only after elaboration.
The formal observation $\calO(x)$ consists of answer classes, class weights, statuses, proof artifacts when available, error logs, and the fixed prover budget. A selector returns an answer class $\hat c(x)$, a representative answer $\hat a(x)$, or abstains with $\bot$.

\section{Coverage Diagnostics}
The diagnostics answer three questions about the formal trace. How much candidate mass reached a well-formed Lean statement? How much candidate mass was actually proved? And, if a class was proved, does it dominate the strongest unresolved rival?  These quantities are pipeline-agnostic and can be computed from answer-class weights and Lean status logs.
\begin{definition}[Typed coverage]
The typed coverage mass is
\begin{equation}
    C_{\typ}(x)=\sum_{c\in\calC(x)} Q_c(x)T_c(x).
\end{equation}
\end{definition}
Typed coverage measures how much candidate probability reached a well-formed Lean statement. Low typed coverage points to autoformalization, parsing, answer-normalization, or library-interface failure.

\begin{definition}[Proved coverage]
The proved coverage mass is
\begin{equation}
    C_{\prf}(x)=\sum_{c\in\calC(x)} Q_c(x)P_c(x).
\end{equation}
\end{definition}
Proved coverage measures how much answer-class probability is positively decided by the formal pipeline. It is distinct from proof existence: proving a low-weight class may be much less informative than proving the dominant class.

\begin{definition}[Unresolved rival mass and formal margin]
If at least one class is proved, let
\begin{equation}
    c^+(x)=\arg\max_{c:P_c(x)=1} Q_c(x)
\end{equation}
be the highest-weight proved class. The unresolved rival mass is
\begin{equation}
    R_{\unres}(x)=\max_{c\ne c^+(x):P_c(x)=0} Q_c(x),
\end{equation}
with $R_{\unres}(x)=0$ when every other class is also proved or no rival exists. The formal margin is
\begin{equation}
    M(x)=Q_{c^+(x)}(x)-R_{\unres}(x),
\end{equation}
and $M(x)=-\infty$ if no class is proved.
\end{definition}
The margin captures the main reliability condition: a proof of one answer is not decisive if a higher-weight inequivalent answer class remains unresolved.

\paragraph{Conflict indicators.}
If two inequivalent final-answer classes are both proved for the same informal problem, we mark a conflict and reject formal selection by default. This should not be read as a Lean inconsistency. It usually indicates an answer-normalization error, a semantic mismatch between generated formal statements, or incompatible autoformalizations of the informal problem.

\paragraph{Coverage cliffs.}
We use \emph{coverage cliff} for the empirical scenario in which the reliability of formal answer selection changes sharply as a function of $C_{\typ}$, $C_{\prf}$, $R_{\unres}$, or $M$. The hypothesis is stronger than ``hard examples have lower proof rates'': below a coverage threshold, failed or missing proofs become non-diagnostic because unresolved classes may be false, unformalized, or unproved.

\section{Risk-Controlled Formal Selection}

\methodname{} is a selective wrapper around a base Lean selector. It does not require training a new theorem prover or changing the formalizer. Its contribution is to decide when the formal signal is sufficiently covered to determine the answer.

\paragraph{Base formal selector.}
The base selector returns the highest-weight proved answer class,
\begin{equation}
    g_{\mathrm F}(x)=c^+(x),
\end{equation}
with $g_{\mathrm F}(x)=\bot$ if no class is proved or if a conflict is detected. The returned natural-language answer is a canonical representative of $c^+(x)$. More sophisticated formal selectors can be substituted, but the coverage logic is unchanged.

\paragraph{Acceptance rule.}
For a threshold tuple $\tau=(\tau_{\typ},\tau_{\prf},\tau_M)$, define
\begin{align}
A_\tau(x)=\ind\{&C_{\typ}(x)\ge\tau_{\typ},\ C_{\prf}(x)\ge\tau_{\prf},\nonumber\\
&M(x)\ge\tau_M,\ g_{\mathrm F}(x)\ne\bot,\nonumber\\
&\text{no conflict}\}.
\end{align}
If $A_\tau(x)=1$, \methodname{} returns $g_{\mathrm F}(x)$. If $A_\tau(x)=0$, the selective version abstains. A full-coverage system may instead call a fallback $g_{\mathrm N}(x)$ such as self-consistency or a learned natural-language verifier; the formal selective-risk certificate below applies only to the accepted formal predictions, not to fallback outputs. Algorithm~\ref{alg:covcal} (Appendix~\ref{app:method-details}) summarizes the instance-level decision rule.

\paragraph{Risk-controlled threshold selection.}
Given a target selective-risk level $\epsilon$ and a finite grid $\calT$, \methodname{} chooses the least conservative rule whose calibration certificate is valid. We report two valid regimes; both produce a certificate of the form ``with probability $\ge 1-\delta$, the selected $\hat\tau$ has population selective risk at most $\epsilon$,'' but they differ in sample efficiency.

The default operational regime is Bonferroni: optimize over the grid on the calibration split and pay a $|\calT|$-fold union-bound penalty. Formally,
\begin{equation}
    \hat\tau_{\mathrm{Bonf}}=\arg\max_{\tau\in\calT} m_\tau
    \quad\text{s.t.}\quad
    U_{\delta/|\calT|}(k_\tau,m_\tau)\le \epsilon,
    \label{eq:opt}
\end{equation}
where $m_\tau$ and $k_\tau$ are the number of accepted calibration examples and accepted errors, and $U_\alpha(k,m)$ is the one-sided Clopper--Pearson upper bound with $U_\alpha(k,0)=1$. If no threshold is feasible, the selective method returns reject-all. This regime is conservative but makes no assumption on the dependence structure between the $|\calT|$ cell statistics: they are jointly nested under the same calibration sample, so a generic union bound is the safest valid statement.

A tighter alternative is a dev-then-cal rule. First, a threshold
$\hat\tau_{\mathrm{DC}}$ is selected using only the development split, for example by choosing the highest-coverage grid cell whose empirical dev error is at most $\epsilon$ under a deterministic tie-break. If no such cell exists, the procedure returns reject-all. Second, the selected threshold is evaluated once on the independent calibration split and is accepted only when
\begin{equation}
U_\delta\!\left(
k_{\hat\tau_{\mathrm{DC}}}^{\mathrm{cal}},
m_{\hat\tau_{\mathrm{DC}}}^{\mathrm{cal}}
\right)\le \epsilon .
\end{equation}
Because $\hat\tau_{\mathrm{DC}}$ is a function only of the dev split, it is independent of the calibration sample. Hence, a single Clopper--Pearson bound at level $\delta$ is sufficient. Theorem~\ref{thm:devcal} gives the finite-sample guarantee under i.i.d.\ dev/cal/deployment samples.
\begin{table*}[t]
\centering
\small
\setlength{\tabcolsep}{3pt}
\resizebox{0.8\textwidth}{!}{%
\begin{tabular}{lcccc}
\toprule
Method & Overall & Accepted & Acc.\ frac. & Test diag. UB \\
\midrule
\multicolumn{5}{l}{\emph{Regime-independent selectors ($n_{\text{cal}}=151$, $\epsilon=0.15$)}} \\
\midrule
Self-consistency       & 0.910 [0.903, 0.916] & 0.910 [0.903, 0.916] & 1.000 [1.000, 1.000] & 0.138 [0.131, 0.146] \\
Confidence-only abst.  & 0.866 [0.857, 0.875] & 0.969 [0.966, 0.973] & 0.894 [0.884, 0.903] & 0.068 [0.063, 0.072] \\
Raw Lean + fallback    & 0.891 [0.884, 0.899] & 0.891 [0.884, 0.899] & 1.000 [1.000, 1.000] & 0.159 [0.150, 0.168] \\
Proof-existence abst.  & 0.193 [0.182, 0.203] & 0.877 [0.857, 0.896] & 0.220 [0.209, 0.231] & 0.258 [0.234, 0.282] \\
\midrule
\multicolumn{5}{l}{\emph{Calibrated formal selectors --- Bonferroni regime}} \\
\midrule
\textbf{\methodname{}}          & \multicolumn{4}{c}{reject-all ($0/20$ partitions certify)} \\
\midrule
\multicolumn{5}{l}{\emph{Calibrated formal selectors --- dev-then-cal regime ($12/20$ partitions certify)}} \\
\midrule
\textbf{\methodname{}}          & 0.194 [0.184, 0.205] & 0.932 [0.914, 0.949] & 0.209 [0.197, 0.220] & 0.192 [0.168, 0.216] \\
\textbf{\methodname{}+fallback} & 0.905 [0.896, 0.913] & 0.905 [0.896, 0.913] & 1.000 [1.000, 1.000] & 0.144 [0.135, 0.154] \\
\bottomrule
\end{tabular}}
\caption{When \methodname{} accepts ($\approx\!21\%$ of items) it is $0.93$ accurate, but only dev-then-cal certifies this ($12/20$ partitions); Bonferroni rejects the sparse signal outright. As self-consistency alone is already $91\%$ accurate, the contribution is the \emph{certificate}, not raw accuracy. Main MATH-500 run (Qwen2.5-Math-7B-Instruct, $n_{\text{cal}}\!=\!151$, $\epsilon\!=\!0.15$; $K\!=\!20$ bootstrap, seed-mean $95\%$ interval). \emph{Test diag.\ UB} is a held-out Clopper--Pearson bound computed after rule selection---a diagnostic, not the certificate of Table~\ref{tab:regime-comparison}; the top block is regime-independent.}
\label{tab:main}
\end{table*}
\paragraph{Practical choices.}
The default class weights are self-consistency frequencies; a class is marked proved if any of its artifacts is kernel-checked, and two inequivalent proved classes trigger rejection rather than a choice. All implementation choices, and the quantities frozen before calibration labels are used, are listed in Appendix~\ref{app:method-details}.

\section{Finite-Sample Guarantees}
\label{sec:guarantee}
The guarantee certifies the selective risk of the complete frozen pipeline: candidate generation, answer normalization, formalization, proof search, coverage computation, and formal selection. It does not certify unresolved candidates and does not interpret failed Lean proofs as negative labels. 
The probability in the guarantee is over the random draw of calibration and future deployment examples, including any example-level randomness that is part of the frozen pipeline. In the reported bootstrap summaries, candidate and formalization logs are fixed, so only the dev/cal/test partition varies.

\paragraph{Selective risk.}
Let $\calD_{\mathrm{cal}}=\{(X_i,C_i^\star)\}_{i=1}^{n}$ be a calibration set drawn i.i.d. from the same population as future test examples, where $X_i$ is the problem and $C_i^\star$ is its normalized reference answer class. The pipeline outputs used by $A_\tau$ and $g_{\mathrm F}$ are computed without inspecting $C_i^\star$ except for final evaluation. For $\tau\in\calT$, let
\begin{align}
    m_\tau&=\sum_{i=1}^n A_\tau(X_i),\nonumber\\
    k_\tau&=\sum_{i=1}^n A_\tau(X_i)\ind\{g_{\mathrm F}(X_i)\ne C_i^\star\}.
\end{align}
The population selective risk is
\begin{equation}
    R(\tau)=\Pr[g_{\mathrm F}(X)\ne C^\star\mid A_\tau(X)=1],
\end{equation}
with the harmless convention $R(\tau)=0$ when $\Pr[A_\tau(X)=1]=0$.

\begin{theorem}[Uniform selective-risk calibration under Bonferroni]
\label{thm:bonferroni}
Assume the threshold grid $\calT$ is fixed before inspecting calibration labels, and the pipeline used to compute $A_\tau$ and $g_{\mathrm F}$ is fixed before calibration. If the calibration examples are i.i.d. from the deployment population, then with probability at least $1-\delta$ over the calibration sample and any example-level pipeline randomness, for all $\tau\in\calT$,
\begin{equation}
    R(\tau) \le U_{\delta/|\calT|}(k_\tau,m_\tau).
\end{equation}
Consequently, whenever Eq.~\ref{eq:opt} returns a feasible threshold, the selected rule has selective risk at most $\epsilon$ with probability at least $1-\delta$.
\end{theorem}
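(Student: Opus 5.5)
The plan is to prove a pointwise-in-$\tau$ coverage statement for a single fixed threshold and then take a union bound over the finite grid. Fix $\tau\in\calT$. Since $\calT$ and the pipeline are frozen before the calibration labels are inspected, the pairs $(A_\tau(X_i),\,\ind\{g_{\mathrm F}(X_i)\ne C_i^\star\})$ for $i=1,\ldots,n$ are i.i.d. draws of a fixed measurable function of $(X_i,C_i^\star)$ and of the example-level pipeline randomness. Write $Z_i=A_\tau(X_i)$ and $E_i=Z_i\ind\{g_{\mathrm F}(X_i)\ne C_i^\star\}$, so that $m_\tau=\sum_i Z_i$ and $k_\tau=\sum_i E_i$.

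The key step is to condition on the acceptance pattern. Condition on the event $\{Z_1,\ldots,Z_n\}=z$ with $m=\sum_i z_i$. By independence across examples, the error indicators of the accepted examples are then i.i.d. Bernoulli with parameter $\Pr[g_{\mathrm F}(X)\ne C^\star\mid A_\tau(X)=1]=R(\tau)$. Hence, conditionally, $k_\tau\sim\mathrm{Bin}(m,R(\tau))$.

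Two cases follow.
\begin{itemize}
\item If $m\ge1$, the standard exactness of the one-sided Clopper--Pearson bound gives $\Pr[R(\tau)>U_\alpha(k_\tau,m)\mid z]\le\alpha$. The reason is that $U_\alpha(k,m)<p$ exactly when the binomial lower-tail probability $\Pr_p[\mathrm{Bin}(m,p)\le k]\le\alpha$. The set of such $k$ is an initial segment, so its probability under the true $p$ is at most $\alpha$.
\item If $m=0$, the convention $U_\alpha(k,0)=1\ge R(\tau)$ makes the failure probability zero.
\end{itemize}
Averaging over $z$ gives $\Pr[R(\tau)>U_\alpha(k_\tau,m_\tau)]\le\alpha$ unconditionally. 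The degenerate case $\Pr[A_\tau=1]=0$ is handled by the convention $R(\tau)=0$.

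Next, set $\alpha=\delta/|\calT|$ and apply the union bound over the $|\calT|$ thresholds. The single-threshold statements are strongly dependent, since they share one sample, but the union bound needs no independence. This yields, with probability at least $1-\delta$, that $R(\tau)\le U_{\delta/|\calT|}(k_\tau,m_\tau)$ simultaneously for all $\tau\in\calT$.

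The consequence follows directly from uniformity. $\hat\tau_{\mathrm{Bonf}}$ is a data-dependent element of $\calT$, and on the good event every $\tau$ satisfies the bound, including the selected one. Feasibility means $U_{\delta/|\calT|}(k_{\hat\tau},m_{\hat\tau})\le\epsilon$, so $R(\hat\tau)\le\epsilon$. If the procedure returns reject-all, there is nothing to certify.

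The main obstacle is the conditioning step. The Clopper--Pearson guarantee is stated for a fixed number of trials, whereas here $m_\tau$ is random. I would handle this by the conditioning argument above: argue that given the acceptance indicators, the accepted errors are i.i.d. with mean exactly $R(\tau)$. This uses only that $A_\tau$ depends on $X_i$ (and the pipeline randomness) and not on $C_i^\star$ beyond the joint i.i.d. structure. I would also check monotonicity of the binomial CDF in $p$, which justifies the coverage inequality for discrete $k$.
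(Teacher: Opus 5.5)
Your proposal is correct and is essentially the paper's proof: a per-threshold Clopper--Pearson bound from conditioning on the acceptances, then a union bound over the predeclared grid so that the calibration-selected $\hat\tau_{\mathrm{Bonf}}$ inherits the simultaneous bound. The paper conditions on $m_\tau$ and you condition on the full acceptance pattern, which gives the same binomial law; your extra justification of Clopper--Pearson coverage is fine, except that strictly $U_\alpha(k,m)<p$ implies, rather than is equivalent to, $\Pr_p[\mathrm{Bin}(m,p)\le k]\le\alpha$, and that direction is the only one you need.
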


\noindent Accepted calibration errors at a fixed threshold are Bernoulli, so a one-sided Clopper--Pearson bound \citep{clopper1934use} controls its risk; the Bonferroni correction makes this simultaneous over the grid, letting the threshold be chosen after seeing calibration errors (proof in Appendix~\ref{app:proofs}).

\begin{theorem}[Dev-then-cal selective-risk calibration, no union bound]
\label{thm:devcal}
Partition i.i.d. samples from the population $\mathcal{P}$ into an independent development split $\calD_{\mathrm{dev}}$ and calibration split $\calD_{\mathrm{cal}}$. Let $\hat\tau=f(\calD_{\mathrm{dev}})$ be any $\sigma(\calD_{\mathrm{dev}})$-measurable choice of threshold cell in $\calT\cup\{\bot\}$; in particular, $\hat\tau$ may be the maximum-coverage dev-feasible cell with a deterministic tie-break. Let $m=m_{\hat\tau}^{\mathrm{cal}}$ and $k=k_{\hat\tau}^{\mathrm{cal}}$ be the accept count and accepted-error count on the calibration split. Define the certified output threshold
\[
\widehat\tau_{\mathrm{out}}=\begin{cases}
\bot & \hat\tau=\bot,\ \text{or }m=0,\ \text{or }U_\delta(k,m)>\epsilon,\\
\hat\tau & \text{otherwise.}
\end{cases}
\]
Then
\begin{multline}
\Pr\!\bigl[\widehat\tau_{\mathrm{out}}\neq\bot \ \Longrightarrow\ R(\widehat\tau_{\mathrm{out}})\le U_\delta(k,m)\le\epsilon\bigr] \\ \ge\ 1-\delta.
\end{multline}
\end{theorem}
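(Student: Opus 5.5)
The plan is to condition on the development split and reduce the statement to a single, fixed-threshold Clopper--Pearson guarantee. Since $\hat\tau=f(\calD_{\mathrm{dev}})$ is $\sigma(\calD_{\mathrm{dev}})$-measurable and $\calD_{\mathrm{cal}}$ is independent of $\calD_{\mathrm{dev}}$, the conditional law of $\calD_{\mathrm{cal}}$ given $\calD_{\mathrm{dev}}$ is still the i.i.d.\ product law from $\mathcal{P}$. Fixing $\calD_{\mathrm{dev}}$ therefore fixes $\hat\tau=\tau_0\in\calT\cup\{\bot\}$ deterministically. It then suffices to show, for every such $\tau_0$, that the conditional probability of the bad event
\[
B=\{\widehat\tau_{\mathrm{out}}\neq\bot,\ R(\widehat\tau_{\mathrm{out}})>U_\delta(k,m)\}
\]
is at most $\delta$. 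Integrating over $\calD_{\mathrm{dev}}$ then gives the claim. The case $\tau_0=\bot$ is trivial, because then $\widehat\tau_{\mathrm{out}}=\bot$ and $B$ is empty. We also note that on $\{\widehat\tau_{\mathrm{out}}\neq\bot\}$ the inequality $U_\delta(k,m)\le\epsilon$ holds by construction, so only the first inequality needs to be controlled.

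For fixed $\tau_0\in\calT$, I would establish the Clopper--Pearson guarantee for a random sample size, conditioning on the acceptance pattern. Let $p=\Pr[A_{\tau_0}(X)=1]$. If $p=0$, then almost surely $m=0$, so $\widehat\tau_{\mathrm{out}}=\bot$ and $B$ has probability zero; the convention $R=0$ is not even needed. Otherwise, condition on the set $I=\{i:A_{\tau_0}(X_i)=1\}$, which has size $m$. Because the pipeline is frozen and the calibration pairs are i.i.d., the accepted pairs $\{(X_i,C_i^\star)\}_{i\in I}$ are, conditionally on $I$, i.i.d.\ from $\mathcal{P}(\cdot\mid A_{\tau_0}=1)$. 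Any example-level pipeline randomness is absorbed into the example, so independence across examples is preserved. Hence, conditionally on $m$, we have $k\mid m\sim\mathrm{Bin}(m,R(\tau_0))$.

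The exact one-sided Clopper--Pearson property then gives, for each fixed $m\ge1$,
\[
\Pr\bigl[R(\tau_0)>U_\delta(k,m)\mid m\bigr]\le\delta .
\]
This follows from the standard argument: $U_\delta(k,m)<r$ iff $\Pr_{\mathrm{Bin}(m,r)}[K\le k]<\delta$, and the left-tail CDF evaluated at a binomial variable is stochastically larger than uniform. For $m=0$ we have $\widehat\tau_{\mathrm{out}}=\bot$ by definition, which is consistent with the convention $U_\alpha(k,0)=1$. Averaging over $m$ gives $\Pr[B\mid\calD_{\mathrm{dev}}]\le\delta$, and averaging over $\calD_{\mathrm{dev}}$ completes the proof. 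No union bound over $\calT$ is needed, because only one cell is ever tested on $\calD_{\mathrm{cal}}$.

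The main obstacle is the conditioning step. Two points need care. First, the conditional law of the accepted subsample must remain i.i.d.\ with error rate $R(\tau_0)$; this requires that $A_{\tau_0}$ and $g_{\mathrm F}$ do not depend on calibration labels or on other calibration examples, which holds because the pipeline is frozen. Second, $\hat\tau$ must not leak information about $\calD_{\mathrm{cal}}$; measurability with respect to $\sigma(\calD_{\mathrm{dev}})$ together with independence of the splits handles this. Conditioning on $\calD_{\mathrm{dev}}$ first reduces everything to the fixed-threshold Clopper--Pearson coverage argument underlying Theorem~\ref{thm:bonferroni}, applied at level $\delta$ instead of $\delta/|\calT|$.
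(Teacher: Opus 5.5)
Your proposal is correct and follows essentially the same route as the paper: use independence of the dev-measurable $\hat\tau$ from the calibration split to reduce to a fixed cell, get $k\mid m\sim\mathrm{Binomial}(m,R(\tau))$, apply the one-sided Clopper--Pearson guarantee at level $\delta$, and marginalize, with the reject-all branches making the implication hold trivially. Your added care fills in details the paper leaves implicit without changing the argument: you condition on the full dev split and on the acceptance index set, you treat the $p=0$ case, and you spell out the Clopper--Pearson tail argument.
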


\noindent Because $\hat\tau$ is fixed on dev, the cal split sees a single threshold, so the bound uses level $\delta$ rather than $\delta/|\calT|$; strict dev/cal separation is the load-bearing assumption (proof in Appendix~\ref{app:proofs}).

\paragraph{dev-then-cal procedure.}
The selection rule, that is the highest-coverage dev-feasible cell then a single cal-side Clopper--Pearson check at level $\delta$, and the boundary cases it handles (empty splits, zero accepts, all-errors, multiple $\epsilon$ targets, and the i.i.d./measurability requirements) are given in Appendix~\ref{app:method-details}.

\paragraph{Comparison of the two regimes.}
At our settings ($|\calT|=125$, $\delta=0.05$), Theorem~\ref{thm:bonferroni} spends $\alpha=4\!\times\!10^{-4}$ per cell while Theorem~\ref{thm:devcal} spends $\alpha=0.05$ on a single cal-side bound, so the two regimes can disagree on feasibility for the same observations. The cliff (Section~\ref{sec:coverage-cliff}) is a property of the data and is identical under either regime; only the certificate's verdict depends on the bound. Section~\ref{sec:ablations} reports both.

\paragraph{No coverage, no verifier-only certificate.}
The next proposition explains why coverage diagnostics are necessary.

\begin{proposition}[Unresolved answer classes are indistinguishable]
\label{prop:indistinguishable}
Fix a problem $x$ and a formal observation $\calO(x)$. Suppose a verifier-only selector $h$ selects a proved class $c_s=h(\calO(x))$, but there is an unresolved answer class $c_u$ with $Q_{c_u}(x)>0$, $P_{c_u}(x)=0$, and no formal evidence in $\calO(x)$ proving, disproving, or identifying $c_u$ as equivalent to $c_s$. Then there are two latent correctness assignments over the informal answer classes that are both compatible with the same formal observation but disagree on whether $c_s$ is correct. Hence no non-vacuous pointwise correctness certificate for $h$ follows from $\calO(x)$ alone unless the selector resolves $c_u$, assumes it away, uses external semantic information or labels, or abstains.
\end{proposition}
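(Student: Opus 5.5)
The plan is an indistinguishability (two-worlds) construction. First I will fix what a latent correctness assignment is: a map $\kappa:\calC(x)\to\{0,1\}$ recording which informal answer classes equal the reference class $a^\star(x)$. Since the answer classes are normalized and pairwise inequivalent, $\kappa$ has at most one class with value $1$. Next I will define compatibility. An assignment $\kappa$ is compatible with $\calO(x)$ if it contradicts no entry of the observation. That is, every decisive negation or inequivalence status in $\calO(x)$ is respected, and no class with such a disproof status is marked correct. The fact that a class is $\mathtt{proved}$ is evidence about its Lean statement, not about the informal class. This reading matches the paper's own position that proofs may be unfaithful and that failed proofs are not negative labels. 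The main obstacle is here: I have to state precisely that $\calO(x)$ constrains $\kappa$ only through explicit disproof or equivalence evidence. I would argue that the hypothesis of the proposition, ``no formal evidence proving, disproving, or identifying $c_u$ as equivalent to $c_s$'', is exactly what makes this reading well defined for the pair $(c_s,c_u)$.

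With this in place, the two assignments are direct. Take $\kappa_1(c_s)=1$ and $\kappa_1(c)=0$ for $c\ne c_s$. Take $\kappa_2(c_u)=1$ and $\kappa_2(c)=0$ for $c\ne c_u$. I then check that both are compatible with $\calO(x)$:
\begin{itemize}
\item $\kappa_2$ marks $c_s$ incorrect. This is allowed because nothing in $\calO(x)$ certifies the informal correctness of $c_s$, and a Lean proof of its formalized statement does not.
\item $\kappa_2$ marks $c_u$ correct. This is allowed because $c_u$ has no disproof, and $P_{c_u}(x)=0$ is not a negative label.
\item The two classes are not forced to coincide, because no equivalence between $c_u$ and $c_s$ is recorded. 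So a world in which they differ in correctness is consistent.
\item $Q_{c_u}(x)>0$ guarantees that $c_u$ is a genuine candidate in the observation, not a phantom class.
\end{itemize}
The statuses, weights and logs in $\calO(x)$ are unchanged in both worlds, so both worlds induce the identical observation. Consequently $h(\calO(x))=c_s$ in both.

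To conclude, let $\mathrm{Cert}$ be any pointwise certificate computed from $\calO(x)$ alone, asserting ``$c_s$ is correct''. Since the observation is identical in the two worlds, $\mathrm{Cert}$ issues the same verdict in both. In world $\kappa_2$ that verdict is false, so the only sound certificate is the vacuous one, for example an error bound of $1$ or abstention. The escape routes listed in the statement correspond exactly to breaking the symmetry between the two worlds:
\begin{itemize}
\item resolving $c_u$ by a proof, disproof or equivalence check adds evidence that rules out one world;
\item assuming $c_u$ away deletes $\kappa_2$ by fiat;
\item external semantic information or labels supply information beyond $\calO(x)$;
\item abstaining avoids the certificate altogether.
\end{itemize}
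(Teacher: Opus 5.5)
Your proposal is correct and follows essentially the same two-worlds argument as the paper. Both keep every artifact, weight, status, and log of $\calO(x)$ fixed, make $c_s$ the correct class in one interpretation and $c_u$ in the other, and conclude that $h$ returns $c_s$ in both and is wrong in one; your explicit compatibility notion, under which a \texttt{proved} status constrains only the generated Lean statement and not informal correctness, spells out an assumption the paper's proof uses implicitly when it keeps $c_s$ proved in the world where $c_u$ is correct.
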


Intuitively, the formal log records what Lean could decide, not the truth values of classes it never resolved: an unresolved inequivalent rival leaves the selected proved class and that rival equally compatible with the same observation (proof in Appendix~\ref{app:proofs}). This is a certification statement, not an empirical claim that unresolved candidates are usually correct. Missing formal evidence is not evidence of mathematical falsehood. Calibration turns the partial observation into a controlled selective problem, using Lean where coverage suffices and forcing abstention or fallback elsewhere.

\section{Experimental Protocol \small{\hspace{1em}\codeicon\href{https://github.com/paulinebourigault/covcal-lean}{Code repository}}}
\label{sec:experiments}

We evaluate whether Lean-derived coverage diagnostics predict when formal answer selection is reliable. The main experiment uses a filtered MATH-500 short-answer subset \citep{hendrycks2021math,lightman2024letsverify}: 378 of 500 examples remain after excluding proof-only, diagram-dependent, and non-normalizable references. The split is 76/151/151 development/calibration/test, with $\epsilon=0.15$ and $\delta=0.05$. We also report a harder levels-4/5 subset of the same run, an autoformalizer ablation that swaps Qwen2.5-Coder for Goedel-Prover-V2-8B/-32B with generation held fixed (Sec.~\ref{sec:ablations}), and an AMC/AIME out-of-distribution set (Appendix~\ref{app:additional-results}).

For each problem, Qwen2.5-Math-7B-Instruct generates $K=32$ candidate solutions. We extract final answers, normalize them into answer classes, and assign each class a self-consistency weight $Q_c$. The pipeline then formalizes the top four answer classes and attempts Lean proof search. The resulting class statuses, which are \texttt{proved}, \texttt{typechecked}, \texttt{timeout}, \texttt{illtyped}, and \texttt{unformalized}, are the only formal observations used by \methodname{}.

We compare self-consistency, confidence-only abstention, raw Lean with fallback, proof-existence abstention, and \methodname{} with and without fallback. Thresholds are chosen on the calibration split using either the Bonferroni grid certificate or the dev-then-cal certificate from Sec.~\ref{sec:guarantee}. The main target is $\epsilon=0.15$ with $\delta=0.05$; the harder levels-4/5 subset uses the stricter $\epsilon=0.10$. Prompts (Appendix~\ref{app:prompts}), the candidate/formalization pipeline (Appendix~\ref{app:checklist}), the analysis log schema (Appendix~\ref{app:artifact-log}), and filtering, hardware, and tactic-portfolio details (Appendix~\ref{app:experimental-details}) are in appendices. 

\begin{table}[t]
\centering
\small
\setlength{\tabcolsep}{3pt}
\resizebox{0.8\columnwidth}{!}{%
\begin{tabular}{lccc}
\toprule
Bin & \# & Proof rate & Winner acc. \\
\midrule
\multicolumn{4}{l}{\emph{by proved coverage} $C_{\prf}$ (all $n=378$)} \\
$C_{\prf}<0.25$         & 283 & 0.035 & 0.20 \\
$0.25\le C_{\prf}<0.50$ &   6 & 1.000 & 0.33 \\
$0.50\le C_{\prf}<0.75$ &   4 & 1.000 & 1.00 \\
$C_{\prf}\ge0.75$       &  85 & 1.000 & 0.96 \\
\midrule
\multicolumn{4}{l}{\emph{by formal margin} $M$ (the $105$ proved problems)} \\
$M<0$            & 10 & 1.000 & 0.10 \\
$0\le M<0.25$    &  5 & 1.000 & 0.40 \\
$0.25\le M<0.5$  &  5 & 1.000 & 0.80 \\
$M\ge 0.5$       & 85 & 1.000 & 0.98 \\
\bottomrule
\end{tabular}}
\caption{The coverage/margin cliff on the main run. \emph{\#} is problems per bin; \emph{Proof rate} the fraction with a kernel-proved class; \emph{Winner acc.} the accuracy of the highest-weight proved class among problems with a proof. The signal is sparse (most problems are in the lowest $C_{\prf}$ bin with almost no proofs) but sharply reliable when present: $96\%$ at $C_{\prf}\ge0.75$ vs $20\%$ at $C_{\prf}<0.25$ ($98\%$ vs $10\%$ by margin).}
\label{tab:cliff}
\end{table}
\section{Results}
\label{sec:results}

The results support three main claims. First, the formal signal is real but sparse: Lean proves at least one answer class for only $28\%$ of problems, and a manual audit finds only $\approx\!43\%$ of proved statements faithful to the problem (versus the $74\%$ an automated check labels non-trivial-and-correct; Sec.~\ref{sec:manual-audit}). Second, the signal is sharply coverage-dependent: the highest-weight proved class matches the reference answer $96\%$ of the time at high proved coverage but only $20\%$ at low coverage, and $98\%$ versus $10\%$ binned by formal margin (Tab.~\ref{tab:cliff}). Third, certifiability depends on autoformalization coverage: with the $7$B Qwen2.5-Coder formalizer the sparse signal forces Bonferroni to reject on all $20$ bootstrap partitions (dev-then-cal certifies $12/20$), whereas the prover-specialized Goedel-Prover-V2-8B lifts coverage to $79\%$ and flips Bonferroni to feasible on $17/20$ (Tab.~\ref{tab:main} and~\ref{tab:regime-comparison}). Unless a column says calibration certificate, upper bounds in answer-selection tables are held-out diagnostics computed after rule selection.

\subsection{Main Answer-Selection Results}
\label{sec:coverage-cliff}
On the main run, self-consistency is already $91\%$ accurate with no abstention. The formal signal is far sparser than the candidate signal: proof-existence and the calibrated formal selectors accept only about $20$--$22\%$ of test items, though their accepted accuracy is high ($0.88$--$0.93$). Crucially, the conservative Bonferroni certificate cannot use this sparse signal at all. It returns reject-all on every partition. Thus, a nontrivial accepted set is certified only under the more sample-efficient dev-then-cal regime, and even then on only $12$ of $20$ partitions, with the held-out test diagnostic ($\approx0.19$) sitting above $\epsilon$ because the accepted set is small. The fallback variant recovers self-consistency-level overall accuracy, but the certificate applies only to the formal accepted predictions, not the fallback outputs.

The main diagnostic signal is not that hard examples have fewer proofs. Low-margin examples can have proof existence but still be unreliable because a stronger unresolved rival answer remains. This is the empirical reason for treating unproved candidate mass as unresolved instead of false.
\begin{table*}[t]
\centering\small\setlength{\tabcolsep}{3pt}
\begin{tabular}{lcccccc}
\toprule
\multirow{2}{*}{Autoformalizer (gen.\ fixed)} & \multirow{2}{*}{$n_{\text{cal}}$} & \multicolumn{2}{c}{Bonferroni ($\alpha=\delta/|\calT|$)} & \multicolumn{2}{c}{Dev-then-cal ($\alpha=\delta$)} & UB ratio \\
\cmidrule(lr){3-4} \cmidrule(lr){5-6}
& & reject@$K\!=\!20$ & med cal UB & reject@$K\!=\!20$ & med cal UB & (med) \\
\midrule
Qwen2.5-Coder-7B (main)  & 151 & 20/20 & -- & 8/20 & 0.121 & -- \\
Goedel-Prover-V2-8B      & 151 & 3/20 & 0.127 & 0/20 & 0.062 & 0.49$\times$ \\
Goedel-Prover-V2-32B     & 151 & 7/20 & 0.143 & 3/20 & 0.115 & 0.81$\times$ \\
\bottomrule
\end{tabular}
\caption{The two certificate regimes on the main run ($K\!=\!20$ bootstrap, $\epsilon=0.15$). Bonferroni applies $\alpha=\delta/|\calT|=4\!\times\!10^{-4}$ over the $125$-cell grid; dev-then-cal applies $\alpha=\delta=0.05$ (Theorem~\ref{thm:devcal}). \emph{reject@$K$} counts refusing seeds; \emph{med cal UB} is the median calibration-side bound over feasible seeds (-- if none). Generation is held fixed; the Goedel-Prover-V2 rows vary only the autoformalizer.}
\label{tab:regime-comparison}
\end{table*}
\subsection{Faithfulness Audit and Qualitative Examples}
\label{sec:manual-audit}

An automated rational-evaluation check classifies the $314$ proved artifacts as genuine
($40.1\%$, a $\mathbb{Q}$-true arithmetic identity), structural ($33.4\%$, a true
gcd/divisor/quantified statement), trivial ($22.9\%$, an $\mathtt{X}=\mathtt{X}$ identity), or
spurious ($2.5\%$, $\mathbb{Q}$-false), so $73.6\%$ are non-trivial and correct. But
non-trivial-and-correct is not \emph{faithful}: a true statement may not capture the problem.
A manual audit of $45$ stratified examples (single annotator; diagnostic error analysis) makes
the gap explicit: genuine proofs are faithful ($6/6$) and trivial ones are not, but
\emph{structural} statements almost never are ($0/6$). They assert a true side-fact or bare existence (e.g.\ $\exists n,\ n>125 \wedge n=129$ for a problem whose answer is $129$), omitting the problem's actual condition, while the audited \emph{spurious} cases are classifier
false-positives (e.g.\ $(3{:}\mathbb{Q})/20=0.15$), not real errors. Reweighting the audited
per-category rates by population frequency, only $\approx\!43\%$ of proved statements are
faithful, about half the $73.6\%$ that typecheck and
hold. Faithful verification is thus rarer than proof success suggests, and the audited failures
are heterogeneous, including semantic drift, proof-search failure on faithful statements, and ill-typed
statements, and supporting our treatment of a missing proof as unresolved rather than negative.
Appendix~\ref{app:examples} gives three worked examples (A)--(C): a proof trusted when the
proved class dominates the formally visible mass (A), a missing proof treated as unresolved
rather than wrong (B), and a low-weight or trivially formalized rival correctly distrusted by
the margin (C).

\subsection{Ablations and Certification Regimes}
\label{sec:ablations}

The binding constraint on the main run is autoformalization coverage: only $28\%$ of problems reach any proof (with the $7$B Qwen2.5-Coder autoformalizer), which is what forces the conservative Bonferroni certificate to reject and leaves the tighter dev-then-cal regime feasible on only $12$ of $20$ partitions. Table~\ref{tab:regime-comparison} compares the two regimes on the same frozen observations. Because coverage is the bottleneck, we also vary the autoformalizer, holding generation fixed. The prover-specialized Goedel-Prover-V2-8B raises proved coverage from $28\%$ to $79\%$ at the same proof-winner precision ($86\%$), and this denser signal flips the certificate from infeasible to feasible: Bonferroni now certifies on $17$ of $20$ partitions (versus $0$ with Qwen2.5-Coder) and dev-then-cal on all $20$, each accepting about $48\%$ of test items at $0.98$ accepted accuracy (Table~\ref{tab:regime-comparison}). Scaling the formalizer further to the $32$B Goedel-Prover-V2 does not extend this gain: proved coverage is $64\%$ (below the $8$B's $79\%$) and the certificate is marginally less feasible (Bonferroni $13/20$, dev-then-cal $17/20$), indicating that prover specialization rather than raw model scale is what lifts coverage past the feasibility threshold. The autoformalizer, not the certificate machinery, thus governs whether risk-controlled formal selection is possible, though the faithfulness caveat of Sec.~\ref{sec:manual-audit} still applies to the underlying proofs.
Additional per-level breakdowns, $\epsilon$-sensitivity, status-count, and hard-subset details are reported in Appendix~\ref{app:additional-results}.

\section{Related Work}

\textbf{Math reasoning and verifiers.}
Self-consistency, outcome-reward models, and process-reward models are common signals for natural-language mathematical reasoning \citep{wang2023selfconsistency,cobbe2021gsm8k,uesato2022solving,lightman2024letsverify,wang2024mathshepherd}. Lean-based systems such as FANS and Safe instead use formal proof as a positive correctness signal \citep{yao2025fans,liu2025safe}. Our focus is complementary: when the formal proof is absent, we ask whether the remaining formal trace is sufficiently covered to support answer selection.

\textbf{Autoformalization and theorem proving.}
LLM autoformalization and Lean/Isabelle theorem proving remain sensitive to domain, library context, and dependency retrieval \citep{wu2022autoformalization,jiang2023draftsketchprove,yang2023leandojo,azerbayev2023proofnet,zheng2021minif2f,yu2025formalmath,patel2026mathatlas,lin2025goedel}. These works motivate our treatment of proof failure as heterogeneous instead of as a negative label. We use Qwen2.5-Math for generation and Qwen2.5-Coder for autoformalization \citep{yang2024qwen25math,hui2024qwen25coder}; our contribution is the risk-controlled wrapper.

\textbf{Selective prediction and risk control.}
Selective classification trades coverage for risk \citep{el2010foundations,geifman2017selective}. Risk-controlling prediction and conformal risk-control methods certify population risk for calibrated rules under distributional assumptions \citep{bates2021distribution,angelopoulos2024conformal}. \methodname{} adapts this perspective to formal answer selection with partial observations: unresolved answer mass does not become a larger prediction set, but instead forces abstention unless a coverage rule can be certified.

\section{Conclusion}

We documented a coverage cliff in Lean-as-judge for natural-language mathematical reasoning and proposed \methodname{}, a selective wrapper that certifies risk only where enough formal evidence is visible, treating missing formal evidence as unresolved rather than negative. Whether its finite-sample certificate (Bonferroni or dev-then-cal) holds depends on autoformalization coverage: a $7$B formalizer leaves the signal too sparse for Bonferroni to certify any partition, whereas a prover-specialized formalizer at $79\%$ coverage flips it to feasible on $17$ of $20$. \methodname{} is a conservative interface for using Lean as a judge; feasible when the formalizer covers enough answer mass, but not a guarantee that all answers are verified.

\newpage
\section*{Limitations}
\label{sec:limitations}

\paragraph{Scope of the certificate.}
The selective-risk guarantee covers only examples for which $A_{\hat\tau}(x)\!=\!1$ and a formal answer is returned. It does not certify individual mathematical truth, does not assign a label to unresolved answer classes, and does not extend to fallback predictions made when $A_{\hat\tau}(x)\!=\!0$.

\paragraph{Distributional assumptions.}
Both regimes require exchangeability of the calibration sample with future test points. The dev-then-cal regime additionally requires independence between the dev and cal splits and that $\hat\tau$ be a function of dev only; leakage from the cal split into threshold selection would invalidate Theorem~\ref{thm:devcal}. Distribution shift between calibration and deployment also voids the bound.

\paragraph{Feasibility depends on the autoformalizer.}
The certificate is only as good as the formal coverage it is given. With a $7$B autoformalizer only about $28\%$ of problems are proved, and at $n_{\text{cal}}\!=\!151$ the Bonferroni certificate returns reject-all on every bootstrap partition (dev-then-cal certifies $12/20$, with the held-out diagnostic sometimes exceeding $\epsilon$ on the small accepted set). A prover-specialized formalizer raises coverage to $79\%$ and makes Bonferroni feasible on $17/20$, so the limitation is not the certificate but the formalizer: the method is informative only when autoformalization covers enough answer mass, and the operator must supply a sufficiently specialized formalizer. Raw scale is not a substitute for specialization. The $32$B Goedel-Prover-V2 variant attains lower proved coverage ($64\%$) than the $8$B ($79\%$) and is marginally less feasible (Sec.~\ref{sec:ablations}).

\paragraph{Single dataset and pipeline.}
The empirical evidence is from one dataset family (MATH-500), one candidate generator, and one proof style. We vary the autoformalizer (up to a $32$B prover) but cannot claim the cliff or the coverage numbers transfer to symbolic Olympiad problems, multi-step formal proofs, other answer formats, or systems built on agentic provers with retrieval.

\paragraph{Bootstrap and partition variability.}
The reported $K\!=\!20$ bootstrap captures variance from the dev/cal/test partition but not from candidate sampling or autoformalization stochasticity; the same observations are reused across seeds. A multi-seed re-run of generation and formalization would estimate the latter sources of variability.

\paragraph{Faithfulness of proved statements.}
A kernel-checked proof certifies the \emph{generated} statement, not that the statement faithfully captures the informal problem. An automated rational-evaluation check of the $314$ proved artifacts finds $73.6\%$ non-trivial and correct, but our manual audit ($45$ stratified examples) shows this overstates faithfulness: many true \emph{structural} statements assert a side-fact or bare existence, so only $\approx\!43\%$ of proved statements are faithful to the problem, and a wrong answer class is proved in $8\%$ of problems. This audit is diagnostic error analysis, not benchmark-grade annotation (\textit{e.g.}, it uses a single annotator and small per-category samples) and the automated diagnostics alone do not flag a faithful-looking but semantically drifted statement.

\section*{Ethical Considerations}

Formal verification can make mathematical reasoning systems more reliable, but overclaiming from failed verification can mislabel correct reasoning as incorrect or produce misleading training signals. The proposed calibration framework is intended to make such limitations explicit. The work uses public math datasets and local open models; no personal data is involved. The main risk is that users may interpret calibrated acceptance as a guarantee for all examples rather than for the accepted subset under the stated data assumptions.

\bibliography{refs}

\newpage
\appendix
\section*{Appendices}

\section{Proofs}
\label{app:proofs}

\begin{proof}[Proof of Theorem~\ref{thm:bonferroni}]
Fix a threshold $\tau\in\calT$. Because $A_\tau$ and $g_{\mathrm F}$ are computed without calibration labels, the accepted calibration examples are an i.i.d. sample from the conditional deployment distribution given $A_\tau(X)=1$, conditional on the number $m_\tau$ of accepted examples. Therefore, when $m_\tau>0$, the accepted-error count satisfies
\[
    k_\tau \mid m_\tau \sim \mathrm{Binomial}(m_\tau, R(\tau)).
\]
If $m_\tau=0$, the convention $U_\alpha(k,0)=1$ makes the bound vacuous. For $m_\tau>0$, the one-sided Clopper--Pearson construction gives
\[
    \Pr\!\left[R(\tau)>U_{\delta/|\calT|}(k_\tau,m_\tau)\right] \le \delta/|\calT|.
\]
Taking a union bound over the finite predeclared grid yields
\[
    \Pr\!\left[\exists \tau\in\calT: R(\tau)>U_{\delta/|\calT|}(k_\tau,m_\tau)\right]\le \delta.
\]
On the complementary event, all grid cells are simultaneously valid. Since Eq.~\ref{eq:opt} selects $\hat\tau$ from this same grid after calibration, the selected threshold inherits the bound. If Eq.~\ref{eq:opt} is feasible, then $U_{\delta/|\calT|}(k_{\hat\tau},m_{\hat\tau})\le\epsilon$, hence $R(\hat\tau)\le\epsilon$ on the simultaneous-validity event.
\end{proof}

\begin{proof}[Proof of Theorem~\ref{thm:devcal}]
The map $f$ depends only on the development split, so $\hat\tau$ is $\sigma(\calD_{\mathrm{dev}})$-measurable and independent of the calibration split. Conditional on the event $\hat\tau=\tau^\circ$ and on $m=m_{\tau^\circ}^{\mathrm{cal}}=m^\circ>0$, the cal accepted-error count is binomial:
\[
    k\mid \hat\tau=\tau^\circ,m=m^\circ \sim \mathrm{Binomial}(m^\circ,R(\tau^\circ)).
\]
The one-sided Clopper--Pearson bound therefore satisfies, for every fixed $\tau^\circ$ and $m^\circ>0$,
\[
\Pr\!\left[R(\tau^\circ)>U_\delta(k,m^\circ)\mid \hat\tau=\tau^\circ,m=m^\circ\right] \le \delta.
\]
The certificate outputs $\widehat\tau_{\mathrm{out}}\neq\bot$ only when $m>0$ and $U_\delta(k,m)\le\epsilon$. Thus, conditional on any selected $\hat\tau$ and cal accept count $m>0$, the implication
\[
    \widehat\tau_{\mathrm{out}}\neq\bot \Longrightarrow R(\widehat\tau_{\mathrm{out}})\le U_\delta(k,m)\le\epsilon
\]
fails with probability at most $\delta$. The reject-all branches make the implication true by convention. Marginalizing over $\hat\tau$ and $m$ gives the stated probability bound.
\end{proof}

\begin{proof}[Proof of Proposition~\ref{prop:indistinguishable}]
Because $\calO(x)$ contains no formal decision or equivalence relation for $c_u$, construct two possible latent interpretations that agree on all observed generated artifacts, weights, statuses, proved classes, and error logs. In the first interpretation, the selected class $c_s$ is the intended correct answer class. In the second, the unresolved inequivalent class $c_u$ is the intended correct answer class, while the formal observation remains unchanged because $c_u$ was not resolved and no equivalence to $c_s$ was established. The verifier-only selector receives identical observations in both interpretations and therefore returns the same class $c_s$. It is correct in the first interpretation and incorrect in the second. Thus the formal observation alone cannot yield a pointwise correctness certificate that is valid across all latent interpretations compatible with the unresolved rival.
\end{proof}

\section{Method and Calibration Details}
\label{app:method-details}

\paragraph{Instance-level decision rule.}
\begin{algorithm}[t]
\small
\caption{Coverage-calibrated formal selection (\methodname{}). Lean is trusted only when the proved answer class dominates enough formally visible answer-class mass.}
\label{alg:covcal}
\begin{algorithmic}[1]
\Require problem $x$, candidates $a_{1:K}$, weights $q_{1:K}$, statuses $s_{1:K}$, thresholds $\tau$, fallback $g_{\mathrm N}$
\State Normalize candidates into answer classes $\calC(x)$ and class weights $Q_c(x)$.
\State Aggregate formal statuses into class indicators $T_c(x)$ and $P_c(x)$.
\State Compute $C_{\typ}(x)$, $C_{\prf}(x)$, unresolved rival mass $R_{\unres}(x)$, and margin $M(x)$.
\If{no class is proved or conflicting proved classes exist}
    \State \Return reject formal selection
\EndIf
\If{$C_{\typ}(x)\ge\tau_{\typ}$, $C_{\prf}(x)\ge\tau_{\prf}$, and $M(x)\ge\tau_M$}
    \State \Return the highest-weight proved class $g_{\mathrm F}(x)$
\Else
    \State \Return $\bot$ for selective evaluation, or $g_{\mathrm N}(x)$ for full-answer evaluation
\EndIf
\end{algorithmic}
\end{algorithm}

\paragraph{dev-then-cal selection and certification.}
Define
\[
\calT_{\mathrm{dev}}
=
\left\{
\tau\in\calT:
 m_\tau^{\mathrm{dev}}>0,\;
 k_\tau^{\mathrm{dev}}/m_\tau^{\mathrm{dev}}\le \epsilon
\right\}.
\]
If $\calT_{\mathrm{dev}}=\emptyset$, the procedure returns reject-all.
Otherwise,
\[
\hat\tau_{\mathrm{DC}}
=
\arg\max_{\tau\in\calT_{\mathrm{dev}}} m_\tau^{\mathrm{dev}},
\]
with deterministic lexicographic tie-breaking. This dev-stage screen is not a
certificate; it is only a threshold-selection rule. The selected threshold is
then certified on the independent calibration split and returned only if
\[
m_{\hat\tau_{\mathrm{DC}}}^{\mathrm{cal}}>0
\quad\text{and}\quad
U_\delta\!\left(
k_{\hat\tau_{\mathrm{DC}}}^{\mathrm{cal}},
m_{\hat\tau_{\mathrm{DC}}}^{\mathrm{cal}}
\right)\le \epsilon .
\]
Otherwise the procedure returns reject-all. The requirement
$m_{\hat\tau_{\mathrm{DC}}}^{\mathrm{cal}}>0$ is a practical convention: a
zero-coverage rule has vacuous selective risk, but it is reported as reject-all
because it accepts no formal predictions.

\paragraph{Edge cases handled by Theorem~\ref{thm:devcal}.}
(i) Empty dev split: $\hat\tau=\bot$, reject-all (dev-then-cal is not applicable when the dev fraction is zero).
(ii) No dev-feasible cell: $\hat\tau=\bot$, reject-all.
(iii) Empty cal accept set: $m=0$, reject-all (the $U_\delta(\cdot,0)=1$ convention makes the formula consistent but the certificate explicitly refuses).
(iv) All cal accepts are errors: $k=m$, $U_\delta(m,m)=1$, certificate refuses.
(v) Multiple $\epsilon$ targets on the same data: select $\hat\tau$ once on dev and report the cal-side $U_\delta(k,m)$ as a single number, then check against each $\epsilon$ separately. Re-selecting per $\epsilon$ would re-introduce multiplicity and break the guarantee.
(vi) Dev/cal/test distribution mismatch: the theorem fails. The i.i.d. assumption in the statement is necessary.
(vii) $\hat\tau$ depends on cal: the theorem fails. Strict $\sigma(\calD_{\mathrm{dev}})$-measurability of $\hat\tau$ is the load-bearing assumption.

\paragraph{Practical choices.}
The method is deterministic after candidate generation and formalization logs are fixed. In experiments, we use self-consistency frequency as the primary weight: if $K$ sampled solutions collapse to answer classes $c$, then $Q_c$ is the fraction of samples in class $c$. Reranker-derived weights are reported only as an ablation. A class is marked proved if any of its formal artifacts is kernel checked; it is marked typed if any artifact elaborates or reaches proof search. If one artifact proves and another artifact for the same class fails, the class remains proved but the failure is retained for diagnostics. If two proved classes are inequivalent under answer normalization, the formal selector rejects rather than choosing between them. The threshold grid, target risk $\epsilon$, confidence level $1-\delta$, normalization rules, Lean version, imports, and proof budgets are fixed before calibration labels are used.

\section{Implementation Details}
\label{app:checklist}

The following items document the protocol used in the reported runs:
\begin{enumerate}[leftmargin=1.2em,itemsep=0.2em]
    \item \textbf{Main dataset:} all-levels MATH-500, $378$ kept after filtering ($500$ raw minus $122$ excluded for non-normalizable references, proof-only items, and diagram-dependent geometry).
    \item \textbf{Hard subset:} MATH-500 levels $4$--$5$ only, $199$ kept after the same filters.
    \item \textbf{Candidate generator:} Qwen2.5-Math-7B-Instruct (bfloat16, vLLM 0.10.2), $K=32$ samples, temperature $0.7$, top-$p=0.95$, maximum 2048 new tokens, final answer in \texttt{\\boxed\{...\}}.
    \item \textbf{Class aggregation:} normalize final answers and compute $Q_c$ from self-consistency frequency.
    \item \textbf{Formalization:} Qwen2.5-Coder-7B-Instruct (bfloat16) autoformalization with four artifacts per top class and one error-feedback repair pass.
    \item \textbf{Formalized classes:} top four classes per problem.
    \item \textbf{Lean proving:} Lean $4.21.0$ with Mathlib at commit \texttt{308445d7}; tactic portfolio \texttt{norm\_num}, \texttt{ring\_nf}, \texttt{omega}, \texttt{linarith}, \texttt{nlinarith}, \texttt{simp}, \texttt{aesop}, \texttt{decide}, and short combinations; $20$ seconds per tactic script.
    \item \textbf{Calibration:} 20/40/40 development/calibration/test split with seed $0$; target selective risk $\epsilon=0.15$ on the all-levels run and $\epsilon=0.10$ on the hard subset; confidence $1-\delta=0.95$; Clopper--Pearson bounds with Bonferroni correction over $\calT$ (Appendix~\ref{app:grid}; $|\calT|=125$).
    \item \textbf{Baselines reported:} self-consistency, confidence-only abstention, raw Lean+fallback, proof-existence abstention, typed-coverage only, proved-coverage only, margin-only, \methodname{}, and \methodname{}+fallback.
\end{enumerate}

\section{Threshold Grid Used in Experiments}
\label{app:grid}

The grid used in all reported runs is
\begin{align}
\calT_{\typ} &= \{0,0.25,0.5,0.75,0.9\},\nonumber\\
\calT_{\prf} &= \{0,0.1,0.25,0.5,0.75\},\nonumber\\
\calT_M &= \{-0.5,0,0.1,0.25,0.5\},\nonumber\\
\calT &= \calT_{\typ}\times\calT_{\prf}\times\calT_M .
\end{align}
The grid must be fixed before inspecting calibration labels. Equation~\ref{eq:opt} then selects the accepted-fraction-maximizing rule whose risk upper bound is at most the target $\epsilon$.

\section{Prompt Templates}
\label{app:prompts}

\paragraph{Candidate generation prompt.}
\begin{quote}\small
Solve the following math problem. Give a concise derivation and put the final answer in \texttt{\\boxed\{...\}}. Problem: \emph{[problem text]}
\end{quote}

\paragraph{Lean autoformalization prompt.}
\begin{quote}\small
You are formalizing a math contest answer in Lean 4 with Mathlib. Given the problem and proposed final answer, write a Lean theorem whose proof would certify that the proposed answer is correct for the problem. Requirements: (i) state the problem's actual mathematical claim with the proposed answer substituted in --- do \textbf{not} write a trivial identity such as \texttt{answer = answer}; the theorem must be false if the answer is wrong; (ii) use \texttt{$\mathbb{Q}$} or \texttt{$\mathbb{R}$} (never \texttt{$\mathbb{N}$}) for any division, fraction, or non-integer arithmetic so that division is exact rather than floor division, annotating numeric literals with their type when needed, e.g.\ \texttt{(125 : $\mathbb{Q}$) / 9}; (iii) prefer simple statements and standard Mathlib notation. Return only Lean code. Problem: \emph{[problem text]} Proposed answer: \emph{[answer class]}
\end{quote}

\paragraph{Repair prompt.}
\begin{quote}\small
The Lean code below failed with the following error. Repair the theorem statement or proof while preserving the intended meaning of the problem and answer. Return only Lean code. Code: \emph{[code]} Error: \emph{[Lean error]}
\end{quote}

\section{What Is Generated and Analyzed}
\label{app:artifact-log}

\methodname{} does not require generating a complete formal proof for every problem. It requires a structured log of candidate answers and Lean verification attempts. The minimal unit is an answer class, not a raw model sample. For each problem, the generated record contains:
\begin{enumerate}[leftmargin=1.2em,itemsep=0.15em]
    \item \textbf{Problem fields:} problem id, dataset, topic label, problem text, reference answer, normalized reference class, and inclusion/exclusion status.
    \item \textbf{Candidate fields:} the $K$ raw sampled solutions, extracted final answers, normalized answer classes, and self-consistency weights $Q_c$.
    \item \textbf{Formalization fields:} for each top answer class, the generated Lean code, repair-round index, imports, and theorem statement.
    \item \textbf{Lean fields:} artifact-level status, class-level status, tactic script or proof attempt, runtime, timeout flag, error message, Lean version, Mathlib commit, and hardware.
    \item \textbf{Selection fields:} self-consistency class, highest-weight proved class, conflict flag, $C_{\typ}$, $C_{\prf}$, $R_{\unres}$, $M$, \methodname{} accept/reject decision, fallback decision when used, and correctness.
\end{enumerate}

\paragraph{Per-status definitions.}
The best Lean status $s_j$ of an artifact takes one of five values. $\mathtt{proved}$ means a Lean proof of the generated statement was found and kernel checked. $\mathtt{typechecked}$ means the statement elaborated but no proof was found. $\mathtt{timeout}$ means proof search began from a well-formed statement but exceeded the budget. $\mathtt{illtyped}$ and $\mathtt{unformalized}$ denote earlier failures. If a pipeline can also prove negations or inequivalence, those outcomes can be added as optional decisive statuses; they are not required by our method.

\paragraph{Existing verified Lean proofs.}
Existing proof corpora can be used for supplementary analysis of proof-search or proof-edit coverage, but they do not replace the main answer-selection experiment. \methodname{} requires multiple candidate final-answer classes with weights and correctness labels. A corpus of already verified Lean proofs usually supplies proof success but not alternative wrong answer classes, unresolved rival mass, or natural-language answer-selection labels.

\section{Additional Experimental Details}
\label{app:experimental-details}

\paragraph{Filtering and splits.}
We include problems whose final answer can be extracted as a number, expression, set, interval, finite tuple, or multiple-choice label mapped to a mathematical answer. We exclude diagram-dependent geometry, proof-only items, and examples whose official answer cannot be normalized. MATH-500 contributes $378$ kept examples after $122$ exclusions; the hard levels-4/5 subset is the $199$-example slice of these at levels 4 and 5. Unless otherwise stated, examples are split after generation and formalization into development, calibration, and test partitions. The seed-0 split is preregistered; the bootstrap summaries resample the frozen observations over $K=20$ dev/cal/test partitions.

\paragraph{Generation and normalization.}
Qwen2.5-Math-7B-Instruct generates $32$ solutions per problem at temperature $0.7$, top-$p=0.95$, and maximum $2048$ new tokens. The normalizer strips admissible units and formatting, canonicalizes fractions/decimals/signs/whitespace, parses simple symbolic expressions, tests numeric equivalence for rationals, decimals, radicals, and finite tuples, and logs ambiguous cases. Class weights are self-consistency frequencies over normalized answer classes.

\paragraph{Formalization and Lean proving.}
For each of the top four answer classes, Qwen2.5-Coder-7B-Instruct produces four Lean artifacts, followed by one error-feedback repair pass when elaboration fails. Lean proof search uses Lean $4.21.0$ with Mathlib commit \texttt{308445d7}; the tactic portfolio is \texttt{norm\_num}, \texttt{ring\_nf}, \texttt{omega}, \texttt{linarith}, \texttt{nlinarith}, \texttt{simp}, \texttt{aesop}, \texttt{decide}, and short combinations, with a $20$-second per-script budget. Main-run class-level statuses are $140$ proved, $271$ typechecked, $251$ illtyped, $0$ timeout, and $741$ unformalized over $1403$ answer-class observations, indicating that autoformalization-tier coverage, not prover timeout, dominates the negative trace.

\paragraph{Baselines.}
The reported baselines are self-consistency, confidence-only abstention, raw Lean plus fallback, proof-existence abstention, typed-coverage-only, proved-coverage-only, margin-only, \methodname{}, and \methodname{} plus fallback. The fallback is self-consistency. Fallback predictions are useful for overall accuracy but are not covered by the selective-risk certificate unless separately calibrated.

\paragraph{Compute and hardware.}
All experiments ran on a single compute node with one NVIDIA H100 80GB HBM3 GPU and $8$ allocated CPU cores; generation and autoformalization use vLLM on the GPU, while Lean proof search runs on the CPU within the same job. The autoformalizer ablations took approximately $5.5$ (Goedel-Prover-V2-8B) and $4.9$ (Goedel-Prover-V2-32B) GPU-hours, and the main MATH-500 and AMC/AIME runs a few GPU-hours each (generation is replayed from cache across the ablations), for roughly $15$--$20$ GPU-hours in total across all reported runs.

\section{Additional Results}
\label{app:additional-results}

This appendix presents supporting evidence behind the results
summary in Sec.~\ref{sec:results}. Tables~\ref{tab:app-per-level}--\ref{tab:app-hard-amc}
provide the corresponding per-level, sensitivity, failure-status, and
robustness details.

\subsection{Full Answer-Selection Comparison}
\label{app:full-selectors}
Table~\ref{tab:app-selectors} reports all nine selectors on the main run under the dev-then-cal regime (under Bonferroni the calibrated rows are reject-all; Table~\ref{tab:regime-comparison}). The single-diagnostic selectors clarify which diagnostic carries the signal: margin-only matches the joint \methodname{} rule exactly ($0.93$ accepted accuracy at $0.21$ accepted fraction), whereas typed- and proved-coverage alone are slightly less precise. Thus, the formal margin carries most of the selective signal at this protocol. Figure~\ref{fig:selective-risk} plots the accepted-fraction/risk tradeoff for the main selectors.

\begin{table}[t]
\centering\small\setlength{\tabcolsep}{3pt}
\resizebox{\columnwidth}{!}{%
\begin{tabular}{lcccc}
\toprule
Selector & Overall & Accepted & Acc.\ frac. & Test UB \\
\midrule
Self-consistency       & 0.910 & 0.910 & 1.000 & 0.138 \\
Confidence-only        & 0.866 & 0.969 & 0.894 & 0.068 \\
Raw Lean + fallback    & 0.891 & 0.891 & 1.000 & 0.159 \\
Proof-existence        & 0.193 & 0.877 & 0.220 & 0.258 \\
Typed-coverage only    & 0.194 & 0.867 & 0.224 & 0.268 \\
Proved-coverage only   & 0.194 & 0.874 & 0.222 & 0.258 \\
Margin only            & 0.194 & 0.932 & 0.209 & 0.192 \\
\methodname{}          & 0.194 & 0.932 & 0.209 & 0.192 \\
\methodname{}+fallback & 0.905 & 0.905 & 1.000 & 0.144 \\
\bottomrule
\end{tabular}}
\caption{All nine selectors on the main run, dev-then-cal regime ($K\!=\!20$ bootstrap, $n_{\text{cal}}=151$, $\epsilon=0.15$; seed-means). \emph{Overall} counts abstentions as errors; \emph{Acc.\ frac.} is the accepted fraction; \emph{Test UB} is the held-out diagnostic upper bound. The calibrated rows (typed/proved/margin/\methodname{}) average over the $12/20$ feasible seeds; the others are regime-independent. Margin-only and \methodname{} coincide, indicating the margin threshold carries most of the selective signal here.}
\label{tab:app-selectors}
\end{table}

\begin{figure}[t]
\centering
\includegraphics[width=0.75\columnwidth]{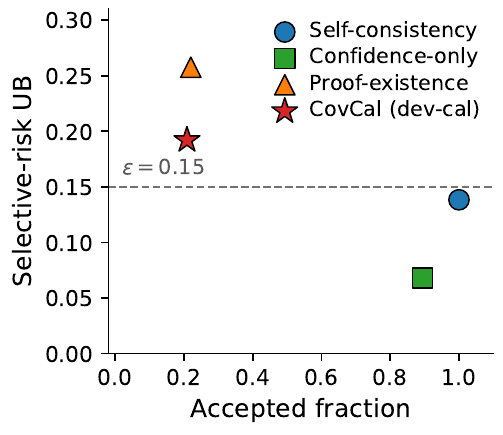}
\caption{Accepted fraction versus held-out selective-risk upper bound on the main run (dev-then-cal; $K\!=\!20$ bootstrap mean). The dashed line is $\epsilon=0.15$. The formal selectors accept $\approx21\%$ of items just above $\epsilon$, while confidence-only abstention reaches $0.89$ coverage at lower diagnostic risk; \mbox{+fallback} variants (full coverage) and the Bonferroni reject-all case are omitted.}
\label{fig:selective-risk}
\end{figure}

\subsection{Per-Level and Domain Breakdowns}
\label{app:per-level-results}

Table~\ref{tab:cliff} bins the cliff coarsely; Table~\ref{tab:app-per-level} aggregates the same main-run observations by MATH-500 difficulty level. Both coverage and reliability decline monotonically with difficulty: mean proved coverage falls from $0.44$ at level~1 to $0.16$ at level~5, and proof-existence accuracy on the proved subset falls from $1.00$ to $0.74$. The coverage drop is the steeper of the two (a $2.7\times$ reduction versus $1.4\times$ for accuracy), so in this protocol increasing difficulty primarily reduces visible formal coverage, while in-bin reliability degrades more slowly.

\begin{table}[t]
\centering
\small
\setlength{\tabcolsep}{3pt}
\resizebox{0.9\columnwidth}{!}{%
\begin{tabular}{cccccc}
\toprule
Level & $n$ & $\bar C_{\typ}$ & $\bar C_{\prf}$ & PE acc. & Cert. feas. \\
\midrule
1 & 35  & 0.797 & 0.441 & 1.00 & -- \\
2 & 64  & 0.733 & 0.328 & 0.91 & -- \\
3 & 80  & 0.627 & 0.226 & 0.90 & -- \\
4 & 97  & 0.616 & 0.191 & 0.78 & -- \\
5 & 102 & 0.520 & 0.162 & 0.74 & -- \\
\bottomrule
\end{tabular}}
\caption{Per-level breakdown on the main MATH-500 run. $\bar C_{\typ}$ and $\bar C_{\prf}$ are mean typed/proved coverage within each level. PE acc. is proof-existence accepted accuracy on examples where at least one answer class is proved. Cert. feas. asks whether proof-existence alone certifies $\epsilon=0.15$ within that level using the same Bonferroni confidence budget; all individual levels are too small for this single-bin stand-in to certify, which is expected and should not be read as evidence of zero useful signal.}
\label{tab:app-per-level}
\end{table}

Table~\ref{tab:app-domain} gives a domain-level view. Geometry and calculus/analysis have substantially lower typed and proved coverage than algebra, number theory, and combinatorics (proved coverage $0.19$ and $0.05$ versus $0.23$--$0.31$), so far fewer of their problems reach any formal verdict. This supports this work's central interpretation that the reliability of Lean-as-judge depends on how much answer-class mass becomes representable and
decidable by the formal pipeline.

\begin{table}[t]
\centering
\small
\setlength{\tabcolsep}{3pt}
\resizebox{\columnwidth}{!}{%
\begin{tabular}{lccccc}
\toprule
Domain & $n$ & Typed & Proved & Acc. & Top failure \\
\midrule
Algebra              & 241 & 0.663 & 0.234 & 0.89 & illtyped \\
Number theory        &  60 & 0.684 & 0.314 & 0.82 & illtyped \\
Combinatorics        &  30 & 0.621 & 0.281 & 0.70 & illtyped \\
Geometry             &  24 & 0.490 & 0.194 & 0.86 & illtyped \\
Calculus / analysis  &  23 & 0.281 & 0.053 & 1.00 & illtyped \\
\bottomrule
\end{tabular}}
\caption{Domain-level formal coverage on the main run. Typed and Proved are mean $C_{\typ}$ and $C_{\prf}$ per domain; Acc. is the accuracy of the highest-weight proved class among problems where a proof exists; Top failure is the modal class-level Lean status among unsuccessful artifacts. Coverage is markedly lower for geometry and calculus/analysis (typed $0.49$ and $0.28$ versus $0.66$ for algebra); the Acc. column is over the small proved subset of each domain (e.g.\ only $2$ proved problems in calculus/analysis) and is noisy.}
\label{tab:app-domain}
\end{table}

\subsection{Status Counts and \texorpdfstring{$\epsilon$}{epsilon}-Sensitivity}
\label{app:status-eps}

Table~\ref{tab:app-status} gives the automated Lean-status breakdown. The dominant statuses are \texttt{unformalized} and \texttt{illtyped}; timeouts on well-formed statements are absent in the main run. These status counts support the claim that the main bottleneck occurs before proof search, but they remain coarser than a manual semantic audit.

\begin{table}[t]
\centering
\small
\setlength{\tabcolsep}{3pt}
\resizebox{0.75\columnwidth}{!}{%
\begin{tabular}{lcc}
\toprule
Lean class-level status & Count & Share \\
\midrule
\texttt{unformalized}  & 741 & 0.528 \\
\texttt{typechecked}   & 271 & 0.193 \\
\texttt{illtyped}      & 251 & 0.179 \\
\texttt{proved}        & 140 & 0.100 \\
\texttt{timeout}       &   0 & 0.000 \\
\bottomrule
\end{tabular}}
\caption{Class-level Lean status distribution on the main run ($1403$ answer-class
observations across $378$ problems; best status per class). The taxonomy is
automatic and based on pipeline statuses. Most answer classes are never
formalized or fail elaboration; only $10\%$ are proved, and no well-formed
statement times out.}
\label{tab:app-status}
\end{table}

\paragraph{Why the $32$B autoformalizer does not help (status-level view).}
Table~\ref{tab:app-status-ablation} compares class-level Lean statuses for the two
Goedel-Prover-V2 autoformalizers with generation held fixed, and explains the headline
of Sec.~\ref{sec:ablations}, that scaling from $8$B to $32$B lowers proved
coverage ($79\%\!\to\!64\%$). The gap is not a proof-search timeout: \texttt{timeout} is
$0$ for both models.
Both models also elaborate statements at essentially the same rate. The
\texttt{unformalized} count is nearly identical ($656$ vs.\ $651$ of $1380$ classes).
The entire difference is a \texttt{proved}$\to$\texttt{typechecked} shift: $493$ classes
are proved by the $8$B but only $347$ by the $32$B, with the lost mass reappearing as
\texttt{typechecked} ($191\!\to\!323$). The larger model writes statements that type-check
just as often but that the fixed Mathlib tactic portfolio closes less often, consistent
with prover specialization (shared by both Goedel-Prover-V2 models), rather than raw
parameter count, being what lifts proved coverage past the certificate-feasibility threshold.

\begin{table}[t]
\centering
\small
\setlength{\tabcolsep}{4pt}
\resizebox{\columnwidth}{!}{%
\begin{tabular}{lcc}
\toprule
Lean class-level status & Goedel-V2-8B & Goedel-V2-32B \\
\midrule
\texttt{proved}        & 493 & 347 \\
\texttt{typechecked}   & 191 & 323 \\
\texttt{illtyped}      &  40 &  59 \\
\texttt{unformalized}  & 656 & 651 \\
\texttt{timeout}       &   0 &   0 \\
\midrule
Proved coverage (problems) & $79.1\%$ & $63.5\%$ \\
\bottomrule
\end{tabular}}
\caption{Class-level Lean status counts for the two autoformalizer ablations ($1380$
answer-class observations over the same $378$ problems; best status per class, generation
held fixed). Scaling $8$B$\to$$32$B leaves \texttt{unformalized} and \texttt{timeout}
essentially unchanged but shifts mass from \texttt{proved} to \texttt{typechecked}: the
larger model's statements elaborate as often but are closed by the fixed tactic portfolio
less often, so proved coverage falls from $79\%$ to $64\%$.}
\label{tab:app-status-ablation}
\end{table}

Table~\ref{tab:app-eps-sensitivity} reports the Bonferroni calibration certificates on the seed-0 main split. This table is intentionally separated from held-out answer-selection diagnostics: it is the calibration-side certificate used by the selection rule. On the sparse real signal the main split returns reject-all through $\epsilon=0.20$ and certifies only at $\epsilon\gtrsim0.26$; the smallest achievable Bonferroni upper bound at the most permissive cell is $0.257$, well above the preregistered target $\epsilon=0.15$.

\begin{table}[t]
\centering
\small
\setlength{\tabcolsep}{3pt}
\resizebox{0.75\columnwidth}{!}{%
\begin{tabular}{lccc}
\toprule
$\epsilon$ & Selected $\hat\tau$ & $(m_{\hat\tau},k_{\hat\tau})$ & Cal. UB \\
\midrule
0.05 & reject-all     & n/a     & n/a \\
0.10 & reject-all     & n/a     & n/a \\
0.15 & reject-all     & n/a     & n/a \\
0.20 & reject-all     & n/a     & n/a \\
0.30 & $(0,0.1,0)$    & $(35,1)$ & 0.257 \\
\bottomrule
\end{tabular}}
\caption{Bonferroni calibration-split certificates for the seed-0 main run ($n_{\mathrm{cal}}=151$, $\delta=0.05$, $|\mathcal T|=125$). On the sparse real signal no grid cell certifies through $\epsilon=0.20$, so \methodname{} returns reject-all at the preregistered target $\epsilon=0.15$; the most permissive cell first certifies near $\epsilon=0.26$ with calibration upper bound $0.257$. This is the conservative regime correctly refusing when the proved signal is too sparse to pay the $|\mathcal T|=125$ union bound.}
\label{tab:app-eps-sensitivity}
\end{table}
A coarse-grid sensitivity check gives the same qualitative conclusion: a $3\times3\times3$ grid lowers the most-permissive-cell upper bound only modestly and does not bring it below $\epsilon=0.15$. Thus the reject-all verdict is driven by the sparsity of the proved signal and the resulting accepted-set errors, not by the granularity of the threshold grid.

\subsection{Hard Subset and AMC/AIME}
\label{app:hard-amc}

Table~\ref{tab:app-hard-amc} records two robustness settings. The hard MATH-500 levels~4--5 subset ($n=199$, the hardest slice of the main run) has an even sparser proved signal than the full set and returns reject-all under \emph{both} regimes at the stricter target $\epsilon=0.10$. An AMC/AIME-style out-of-distribution set ($n=164$), run on the same pipeline, confirms that the sparsity persists under a harder problem distribution: self-consistency accuracy is far lower ($0.40$ versus $0.91$ on MATH-500), the proved signal is sparse (proof-existence accepts only $\approx\!11\%$ of test items), and both certificate regimes return reject-all at $\epsilon=0.10$.

\begin{table*}[t]
\centering
\small
\setlength{\tabcolsep}{3pt}
\resizebox{0.75\textwidth}{!}{%
\begin{tabular}{lcccccc}
\toprule
Setting & $n$ & $n_{\rm cal}$ & $\epsilon$ & SC & PE acc. & PE frac. / verdict \\
\midrule
Main MATH-500       & 378 & 151 & 0.15 & 0.910 & 0.877 & 0.220 / dev-cal $12/20$ \\
Hard MATH-500 L4--5 & 199 &  80 & 0.10 & 0.859 & 0.761 & 0.231 / reject-all (both) \\
AMC/AIME            & 164 &  66 & 0.10 & 0.400 & 0.429 & 0.108 / reject-all (both) \\
\bottomrule
\end{tabular}}
\caption{Robustness settings versus the full main run. SC is self-consistency test accuracy; PE acc.\ and PE frac.\ are proof-existence accepted accuracy and accepted fraction; the last field gives the certificate verdict ($\epsilon$ as listed). The hard subset is the level-4/5 slice of the main run ($n=199$); its proved signal is even sparser, so both regimes reject-all at the stricter $\epsilon=0.10$. The AMC/AIME row is an out-of-distribution check on the same pipeline; its sparser signal and lower self-consistency accuracy likewise yield reject-all under both regimes at $\epsilon=0.10$.}
\label{tab:app-hard-amc}
\end{table*}

For the hard subset, confidence-only abstention can produce a low held-out error bound, but that is not a Lean-backed formal certificate. The formal coverage-based rows reject because the calibration-side Bonferroni upper bound for proof-existence alone is already above the target. This is the intended conservative behavior of the certificate when formal coverage is too low for the
specified risk level.

\section{Worked Examples}
\label{app:examples}

The three cases of Section~\ref{sec:manual-audit} are presented next.

\paragraph{(A) Accept --- faithful proof, full coverage.}
\textcolor{red}{\emph{Problem:}} ``What is the $2003$rd term of the sequence of odd numbers $1, 3, 5, 7, \dots$?'' \textcolor{blue}{\emph{Reference answer:}} $4005$. All $K{=}32$ samples collapse to the single answer class $4005$ ($w{=}1.0$), which is proved. The kernel-checked statement is
\[
2\cdot 2003 - 1 = 4005 \qquad (\text{by } \mathtt{norm\_num}),
\]
a faithful closed form of ``the $n$th odd number is $2n-1$.'' Diagnostics: $C_{\typ}{=}C_{\prf}{=}1$, margin $1$. \methodname{} accepts $4005$ (correct). \textcolor{orange}{\emph{Lesson:}} a proof is trusted when the proved class dominates the formally visible mass with a clear margin.

\paragraph{(B) Abstain --- no formal signal, fallback correct.}
\textcolor{red}{\emph{Problem:}} ``Simplify $\cos\!\frac{2\pi}{15}\,\cos\!\frac{4\pi}{15}\,\cos\!\frac{8\pi}{15}\,\cos\!\frac{16\pi}{15}$.'' \textcolor{blue}{\emph{Reference answer:}} $\tfrac{1}{16}$. The dominant class $1/16$ ($w{=}0.84$) and the minor classes $1/32$ and $1/8$ all fail Lean elaboration. The trigonometric product identity does not autoformalize. So, $C_{\typ}{=}C_{\prf}{=}0$ and no class is proved. \methodname{} abstains; the self-consistency fallback returns $1/16$ (correct). \textcolor{orange}{\emph{Lesson:}} a missing proof is unresolved, not negative; abstention plus fallback preserves the correct answer.

\paragraph{(C) Abstain --- margin distrusts a vacuous proof of a wrong rival.}
\textcolor{red}{\emph{Problem:}} ``An equilateral triangle is inscribed in the parabola $x^2 = 8y$, such that one of the vertices of the triangle coincides with the vertex of the parabola. Find the side length of this equilateral triangle.'' \textcolor{blue}{\emph{Reference answer:}} $16\sqrt{3}$. The correct class $16\sqrt{3}$ ($w{=}0.88$) fails to autoformalize (illtyped); the only proved class is a low-weight rival $16$ ($w{=}0.03$), whose ``proof'' is the vacuous identity
\[
(16 : \mathbb{Q}) = 16 \qquad (\text{by } \mathtt{norm\_num}),
\]
which holds for any value and therefore provides no evidence about the answer. Diagnostics: $C_{\prf}{=}0.03$, margin $-0.84$. Because the proved class carries far less weight than the unresolved dominant class, the strongly negative margin makes \methodname{} abstain; the fallback recovers $16\sqrt{3}$ (correct). \textcolor{orange}{\emph{Lesson:}} proof existence on a low-weight or trivially formalized rival is correctly distrusted by the margin.

\end{document}